\documentclass[letterpaper, 10pt, conference]{ieeeconf}  

\IEEEoverridecommandlockouts                              

\overrideIEEEmargins                                      




\usepackage[T1]{fontenc}
\usepackage[utf8]{inputenc}
\usepackage{amsthm}
\usepackage{times}
\usepackage{multicol}
\usepackage[bookmarks=true]{hyperref}
\usepackage{xcolor}
\usepackage{hyperref}
\usepackage{amsmath, amssymb}
\usepackage{amsfonts}
\usepackage{graphicx}
\usepackage{siunitx}
\usepackage{standalone}
\usepackage{booktabs}
\usepackage[ruled,vlined,linesnumbered,noend]{algorithm2e}
\usepackage{mdframed}
\usepackage{fancyvrb,multirow}
\usepackage{soul}
\usepackage{dsfont,mathabx}
\usepackage{array, booktabs}
\usepackage{subfigure}
\usepackage{booktabs}
\usepackage{makecell}
\usepackage{threeparttable}


\newtheorem{theorem}{Theorem}

\theoremstyle{definition}

\newtheorem{definition}{Definition}
\newtheorem{remark}{Remark}



\title{\LARGE \bf Multi-UAV Deployment in Obstacle-Cluttered Environments \\with LOS Connectivity}

\author{Yuda Chen$^1$, Shuaikang Wang$^1$, Jie Li$^2$ and Meng Guo$^1$
  \thanks{
    The authors are with
	$^1$the Department of Mechanics and Engineering Science,
	College of Engineering, Peking University, Beijing 100871, China;
	and $^2$National University of Defense Technology, Hunan 410073, China.
    This work was supported by the National Natural Science Foundation
    of China (NSFC) under grants 62203017, T2121002, U2241214;
	2030-Key Project under Grant 2020AAA0108200.
    Corresponding author: Meng Guo, {\tt\small meng.guo@pku.edu.cn}.}
}

\begin{document}
\maketitle
\thispagestyle{empty}
\pagestyle{empty}

\setlength{\textfloatsep}{6pt}  


\begin{abstract}
    A reliable communication network is essential for multiple UAVs operating within obstacle-cluttered environments,
	where limited communication due to obstructions often occurs.
	A common solution is to deploy intermediate UAVs to relay information via a multi-hop network,
	which introduces two challenges:
	(i) how to design the structure of multi-hop networks;
	and (ii) how to maintain connectivity during collaborative motion.
	To this end, this work first proposes an efficient constrained search method based on the minimum-edge RRT$^\star$ algorithm, to find a spanning-tree topology that requires a less number of UAVs for the deployment task.
	Then, to achieve this deployment,
	a distributed model predictive control strategy is proposed for the online motion coordination.
	It explicitly incorporates not only the inter-UAV and UAV-obstacle distance constraints,
	but also the line-of-sight (LOS) connectivity constraint.
	These constraints are well-known to be nonlinear and often tackled by various approximations.
	In contrast, this work provides a theoretical guarantee
	that
	all agent trajectories are ensured to be collision-free with a team-wise LOS connectivity at all time.
	Numerous simulations are performed in 3D valley-like environments,
	while hardware experiments validate its dynamic adaptation
	when the deployment position changes online.
\end{abstract}

\section{Introduction}\label{sec:intro}
Fleets of unmanned aerial vehicles (UAVs) have been widely deployed to accomplish
collaborative missions for exploration, inspection and rescue~\cite{Chung2018,Quan2022,Hu2024,tian2024ihero}.
In many applications, the UAVs need to keep a reliable communication with a ground station to transfer data and monitor status of the fleet~\cite{Yin2023}.
This however can be challenging due to the limited communication range and obstructions from obstacles.
An effective solution as also adopted
in~\cite{Yin2023,zhang2025fly} is to deploy more UAVs as relay nodes to form a multi-hop
communication network.
Two key challenges arise with this approach, i.e.,
(i) how the multi-hop network should be designed,
including the topology and the positions of each node;
(ii) how to control the UAV fleet to form this network,
while maintaining the team-wise connectivity and avoiding collisions
in an obstacle-cluttered environment.

\begin{figure}[t]
  \centering
  \includegraphics[width=1\linewidth]{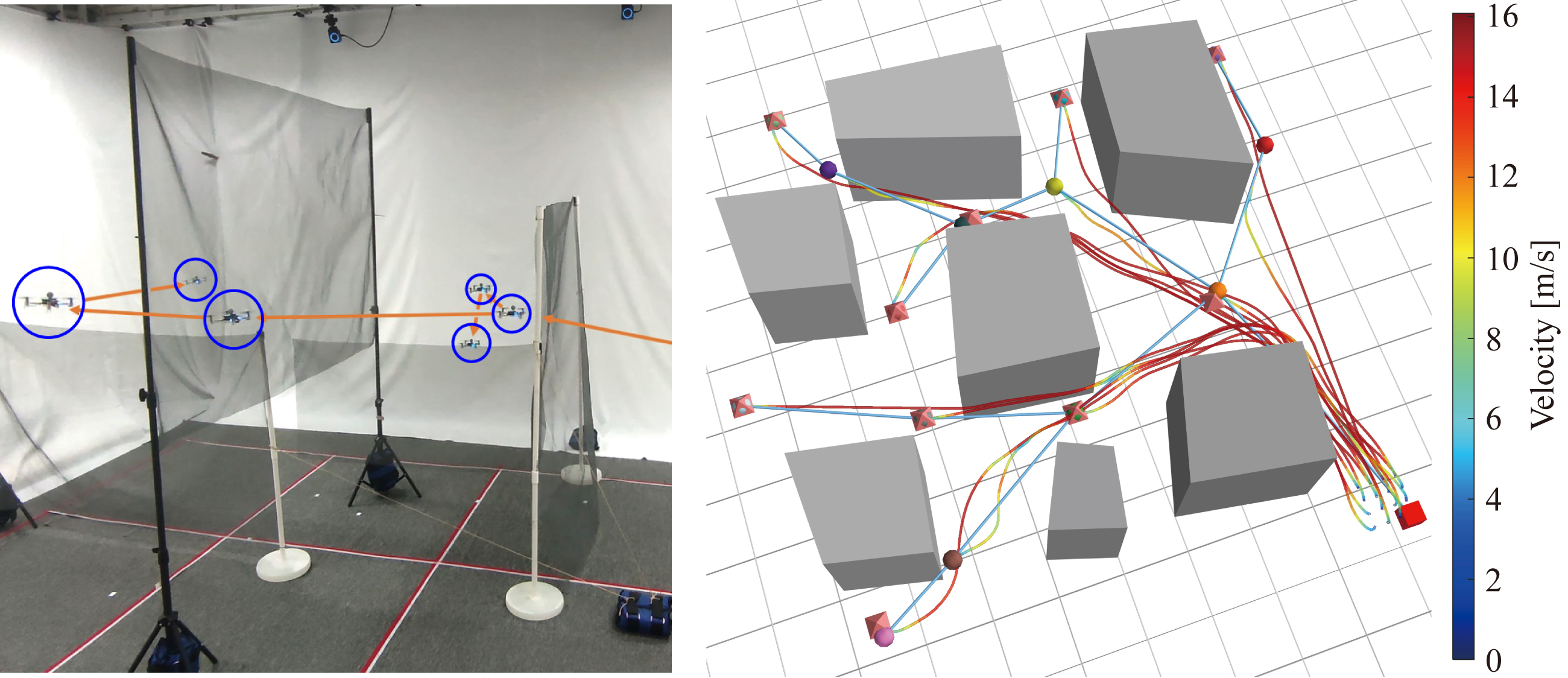}
  \vspace{-5mm}
  \caption{
  \textbf{Left:} Experiment of $6$ quadrotors cooperatively searching two targets.
  \textbf{Right:} Simulation of $15$ agents (colored spheres) are deployed from the ground station (red cubic) to reach $10$ targets (pink diamonds).
		The optimized network topology (blue lines) are kept during motion at all time.}
  \label{fig:overall}
\end{figure}

\subsection{Related Work}\label{subsec:intro-related}
Regarding the design of communication network,
linear graphs are proposed in~\cite{Schouwenaars2006,Varadharajan2020}
for exploration and surveillance tasks in complex environments.
Such structure is simple to design but rather limited, as it only allows the tailing agent to perform a task.
Another common topology is called spanning tree as adopted in~\cite{Majcherczyk2018,Luo2020,Yi2021},
where all agents as leaves of this tree can execute multiple tasks simultaneously.
However, these works often assume a redundant number of agents
without estimating or minimizing the number of agents that is required to accomplish the tasks.
Furthermore, the communication model considered in this work is often simplified to be range-based,
i.e., any two agents within a certain distance can exchange information.
However, obstructions due to obstacles can severely degrade the communication quality and the transmission rate.
Thus, as also motivated in~\cite{Esposito2006,Boldrer2021,guo2018multirobot},
the line-of-sight (LOS) model is a more practical choice for approximating the inter-agent communication in obstacle-cluttered environments.
More concretely, the work in~\cite{Boldrer2021} proposes the notion of visibility-weighted minimum spanning tree (MST),
which leads to a more robust path compared to~\cite{Luo2020} under LOS constraints.
Lastly, more realistic communication models for multi-agent system are
proposed in~\cite{Yin2023, Clark2C022,tian2024ihero},
which take into account reflection and diffraction effects with a higher-order model.
Whereas essential for certain applications,
such models are often highly nonlinear and non-analytical,
yielding them difficult to be incorporated
into the collaborative motion strategy.
Thus, despite of being a conservative approximation,
the considered LOS constraint can ensure a high quality of inter-agent communication.
\vspace{-1mm}

On the other hand,
it is a non-trivial problem to maintain the designed communication network during collaborative motion,
especially when avoiding collisions between agents and obstacles.
Numerous methods have been proposed to address these constraints.
For instances, artificial forces are designed in~\cite{Feng2015,Paolo2013},
e.g., repulsive forces for obstacles and other agents;
and attractive forces for the targets and connected agents.
Lyapunov-based nonlinear controllers are proposed in~\cite{zavlanos2011graph,guo2013controlling}
mostly for free-spaces.
Control barrier certificates are proposed in~\cite{Luo2020,Wang2016}
for both  constraints of connectivity maintenance and collision avoidance.
Despite of their intuitiveness,
these methods often lack theoretical guarantee on the safety when encountering control and state constraints.
Furthermore, the LOS restriction is particularly difficult to incorporate due to its non-convexity.
A geometric approach is proposed in~\cite{Boldrer2021} based on the Loyd algorithm,
which drives the agents towards the intersection of neighboring communication zones.
It however neglects the dynamic constraints such as limited velocity and acceleration.
Optimization-based methods such as~\cite{Caregnato2022} directly formulate these constraints
as mixed integer linear programs (MILP),
of which the complexity increases drastically with the number
of agents and obstacles, yielding them impractical for real-time applications.

\subsection{Our Method}\label{subsec:intro-our}
To tackle these limitations, this work proposes an integrated framework of network formulation
and motion coordination for deploying UAV fleets in obstacle-cluttered environments.
It generates an embedded network topology as spanning trees via the minimum-edge RRT$^\star$ algorithm,
subject to the LOS restrictions.
During online execution, a collaborative motion coordination strategy is proposed based on the
distributed model predictive control (MPC) that maintains the designed network topology
and generates collision-free trajectories.
In particular, the LOS constraints as well as the requirement of collision avoidance are explicitly formulated as linear constraints by restricting the UAVs in constructed convex polyhedra.
It has been shown that the MPC can be solved online efficiently,
with a theoretical guarantee on its feasibility.
Effectiveness of the proposed framework is validated in numerous simulations and hardware experiments.
The main contributions are two-fold:
\begin{itemize}
	\item[$\bullet$]
	A planning method is introduced to construct a spanning-tree-like communication topology
	with a minimum number of edges.
	Different from the common self-organizing topology like \cite{Varadharajan2020,Luo2020,Boldrer2021},
	the proposes algorithm of topology designing decreases the number of agents required for the task.

	\item[$\bullet$] The proposed motion control strategy based on
	distributed MPCs theoretically guarantees the network connectivity and collision-avoidance.
	Compared with MILP-based methods in \cite{Caregnato2022} which handles LOS restrictions,
	our method reduces the planning time by $90\%$ and exhibits online adaptation
	towards dynamic targets.
\end{itemize}

\section{Problem Description}\label{sec:problem}

\subsection{Robot Model}\label{subsec:robots}
As shown in Fig.~\ref{fig:overall},
consider a fleet of $N>0$ UAVs operating within the 3D workspace~$\mathcal{W}\subset \mathbb{R}^{3}$.
Each agent follows the standard double-integrator model
$\dot{x}^i(t) \triangleq [v^i(t), u^i(t)]$,
where $x^i(t) \triangleq [p^i(t),\, v^i(t)]$ is the state of agent~$i$;
$p^i(t), v^i(t), u^i(t) \in \mathbb{R}^3$ are the position, velocity and control input of agent~$i$ at time~$t\geq 0$,
respectively, $\forall i\in \mathcal{N} \triangleq  \{1,\cdots,N\}$.
Moreover, the state and control input of each agent~$i\in \mathcal{N}$
are restricted by:
\begin{equation*}
	\| \Theta_a u^i(t) \|_2 \le a_{\text{max}}, \; \| \Theta_v v^i(t) \|_2 \le v_{\text{max}},
\end{equation*}
where~$\Theta_v,\, \Theta_a$ are positive-definite matrices,
and $v_{\text{max}},\, a_{\text{max}}>0$ are the maximum velocity and acceleration, respectively.
Note that the total number of agents $N$ is \emph{not} fixed,
rather designed as part of this problem.

\subsection{Communication Constraints}\label{subsec:communication}
Any pair of agents~$i,j\in \mathcal{N}$ can direct communicate
as \emph{neighbors} if both the following two conditions hold:
(i) their relative distance is within the communication range~$d_c>0$,
i.e., $\|p^i(t)-p^j(t)\|_2 \leq d_c$;
(ii) the line-of-sight (LOS) between the agents
cannot be obstructed by obstacles, i.e.,
$\textbf{Line}(p^i(t),\, p^j(t))\, \cap \, \mathcal{O} = \emptyset$,
where~$\textbf{Line}(p^i(t),\, p^j(t))$ is the line segment
from position~$p^i(t)$ to position~$p^j(t)$;
and
$\mathcal{O}\subset \mathcal{W}$ is the volume occupied
by a set of convex-shaped static obstacles,
each of which is defined as the convex hull of a set of known vertices.
Lastly, there is a ground station at position~$p_g \in \mathcal{W}$,
which needs to communicate with any agent as described above.
With a slight abuse of notation,
the ground station is denoted by agent~$0$ and $\widetilde{\mathcal{N}} \triangleq \{ 0, 1, \cdots, N \}$.

Consequently, an undirected and time-varying
communication network
$G(t)\triangleq ( \widetilde{\mathcal{N}} ,\, E(t))$
can be constructed given the neighboring relation above,
where
$E(t) \subset \widetilde{\mathcal{N}} \times \widetilde{\mathcal{N}}$
for time~$t>0$.
In other words, $(i,\,j)\in E(t)$ if the position of agents~$i$ and $j$
at time~$t$ satisfies the two conditions above.
The network $G(t)$ is called \emph{connected} at time~$t$
if there exists a path between any two nodes in $G(t)$, i.e.,
$G(t) \in \mathcal{C}$, where $\mathcal{C}$ denotes the set of all connected graphs

\subsection{Collision Avoidance}\label{subsec:collision-avoid}
Each agent~$i\in \mathcal{N}$
occupies a spherical region with radius~$r_a>0$, denoted by
$\mathcal{B}^i(t)\triangleq \{p\in \mathcal{W}\,|\, \|p-p^i(t)\|_2 \leq r_a\}$.
To avoid collisions, the spherical regions of any two agents~$i,j\in \mathcal{N}$
must satisfy the condition~$\mathcal{B}^i(t)\cap \mathcal{B}^j(t) = \emptyset$.
Furthermore, each agent~$i\in\mathcal{N}$ must avoid colliding with static obstacles at time~$t>0$,
meaning that the condition $\mathcal{B}^i(t) \cap \mathcal{O} = \emptyset$ must be satisfied.

\subsection{Problem Statement}\label{subsec:prob-statement}
Lastly, there are $M>0$ target positions within the workspace~$\mathcal{W}$,
denoted by
$\mathcal{P}_{\text{tg}} \triangleq  \left\{p_{{\text{tg}},m}, \, m \in \mathcal{M} \right\}\subset \mathcal{W}\backslash \mathcal{O}$,
where $\mathcal{M}\triangleq \{1,\cdots,M\}$.
Each target position is associated with potential tasks to be performed by the agents.
Each agent~$i\in \mathcal{N}$ starts from its initial position~$p^i_0$,
which is collision-free and located in close proximity to the ground station.
Additionally, the initial communication network $G(0)$ is assumed to be connected.
Thus, the problem is formulated as a constrained optimization problem:
\begin{equation}\label{eq:problem}
	\begin{split}
		&\textbf{min}_{\{u^i(t),\, T\}}\; N\\
		\textbf{s.t.}\quad
		& \dot{x}^i(t) = [v^i(t),u^i(t)],\,\forall i,\,\forall t; \\
		& \| \Theta_a u^i(t) \|_2 \le a_{\text{max}}, \; \| \Theta_v v^i(t) \|_2 \le v_{\text{max}},\,\forall i,\,\forall t;\\
		& \mathcal{B}^i(t)\cap \mathcal{B}^j(t) = \emptyset, \; \mathcal{B}^i(t)\cap \mathcal{O} = \emptyset,\,\forall i,\,\forall j,\,\forall t;\\
        & p^{i_m}(T)=p_{{\text{tg}},m},\,\forall m;\, G(t) \in \mathcal{C},\,\forall t;
	\end{split}
\end{equation}
where~$t\in [0,\, T]$;
$T>0$ is the termination time when all target positions are reached;
$i_m\in \mathcal{N}$ is the agent that reaches target~$m\in \mathcal{M}$;
the objective is to minimize the total number of agents required for the mission;
and the constraints are the aforementioned dynamic model, communication connectivity,
and safety guarantees.
\section{Proposed Solution}\label{sec:solution}
The proposed solution consists of two layers.
First, the communication network is formulated given the
workspace and target locations to minimize the team size.
Second, a collaborative control strategy is designed to drive the agents
towards the target positions while maintaining the desired topology and avoiding collisions.

\subsection{Design of the Communication Network}\label{sec:network-formulation}

\subsubsection{Minimum Edge RRT$^*$}\label{subsubsec:RRT}
To begin with, a modification of the informed RRT$^\star$,
as introduced in \cite{Gammell2014},
is proposed for the design of the communication network..
The proposed method, referred to as \texttt{MiniEdgeRRT$^\star$},
aims to find a set of paths from a starting point to a set of target points,
minimizing the number of edges along each path.
Specifically, the following three modifications are implemented.

\emph{Goal Region:} The goal region is defined as the union of vertices surrounding the target points,
    allowing for the concurrent identification of paths toward multiple targets.

\emph{Cost:} The cost between neighboring vertices in the tree is
	$\textbf{c}(\widehat{\nu},\,\nu) \triangleq \|\widehat{\nu}-\nu\|_2 + \kappa$,
	where~$\widehat{\nu}$ represents the parent node of~$\nu$ and~$\kappa>0$ is a large penalty
	added to the cost of each edge.
	The total cost of a path~$\Gamma$ is the accumulated cost of all edges alone the path,
	denoted by~$\texttt{Cost}(\Gamma)$.
	Thus, paths with fewer edges are always preferred.

\emph{Sample:} New samples~$\nu_{\text{new}}$ are generated within a distance~$d_c$ of
	an existing sample, i.e., $\|\nu_{\text{new}}-\nu_{\text{nearest}}\|\leq d_c$,
	ensuring that any two neighboring vertices satisfy the communication-range constraint.

%
%

\begin{remark}
	Based on the analysis in~\cite{Gammell2014},
    it can be shown that a path with fewer edges can be found
    for each target point in~$\mathcal{P}_{{\text{tg}}}$ given the starting vertex.
	\hfill $\blacksquare$
\end{remark}


\subsubsection{Network Formulation}\label{subsubsec:network-formula}
The paths planned by \texttt{MiniEdgeRRT$^\star$} are initially treated as independent paths to the targets,
which can be further improved by increasing the number of \emph{shared} edges.
First of all, a spanning-tree topology is defined as follows.


\begin{definition}
	An \emph{embedded} spanning tree is defined as a set of 3-tuples:
	$\mathcal{T} \triangleq \{\nu_0, \nu_1, \cdots \}$,
	where each vertex~$\nu_i \triangleq  (i,\, p_i,\, I_i)$,
	with index~$i\in \mathbb{Z}$,
	position~$p_i\in \mathcal{W}\backslash \mathcal{O}$
	and parent index~$I_i\in \mathbb{Z}$ of vertex~$\nu_i$.
	Note that~$\nu_0 \triangleq (0,\, p_{0},\,\emptyset)$ is
	the root, corresponding to the ground station. \hfill $\blacksquare$
\end{definition}

As summarized in Alg.~\ref{AL:span-tree},
an iterative algorithm is proposed
to incrementally construct the spanning tree
by greedily adding target points
based on their minimum-cost path.
Specifically, the spanning tree $\mathcal{T}$ is initialized
with the newly-added vertices~$\mathcal{V}_\text{new}$,
the set of indices $\mathcal{I}$ associated with the target points,
and the current-best path $\Gamma_c^i$ for each target~$i \in \mathcal{I}$,
as shown in Line~\ref{algline:initial-path_tree}.
During each iteration,
the set of paths~$\widehat{\Gamma}$ from each target~$i\in \mathcal{I}$ to every vertices in $\mathcal{V}_\text{new}$
is derived via the proposed \texttt{MiniEdgeRRT}$^\star$ in Line~\ref{algline:path-list}.
The path with the minimum cost~$\Gamma^\star$ is selected from the $\widehat{\Gamma}$ in Line~\ref{algline:get-best-path}.
Subsequently, the current-best path~$\Gamma_c^i$ for target~$i$ is updated by
comparing it with~$\Gamma^\star$ in Line~\ref{algline:path-update}.
The index of the target point with the minimum cost~$i^\star\triangleq \text{argmin}_i\{\Gamma_t^{i^\star}\}$
is determined in Line~\ref{algline:best-index}.
Then, the associated target~$i^\star$ is removed from the set of potential targets~$\mathcal{I}$,
and the nodes extracted from~$\Gamma_t^{i^\star}$ are added to
the spanning tree as~$\mathcal{V}_\text{new}$ in Lines~\ref{algline:new-point}
and~\ref{algline:add-branch}, with their preceding nodes as parents.
This process is repeated until the set~$\mathcal{I}$ is empty,
indicating that all targets have been connected to the tree~$\mathcal{T}$.
As illustrated in Fig.~\ref{tree-span},
the number of edges in this topology is significantly fewer
than in a standard minimum spanning tree.

\begin{figure}[t]
	\centering
	\includegraphics[width=1\linewidth]{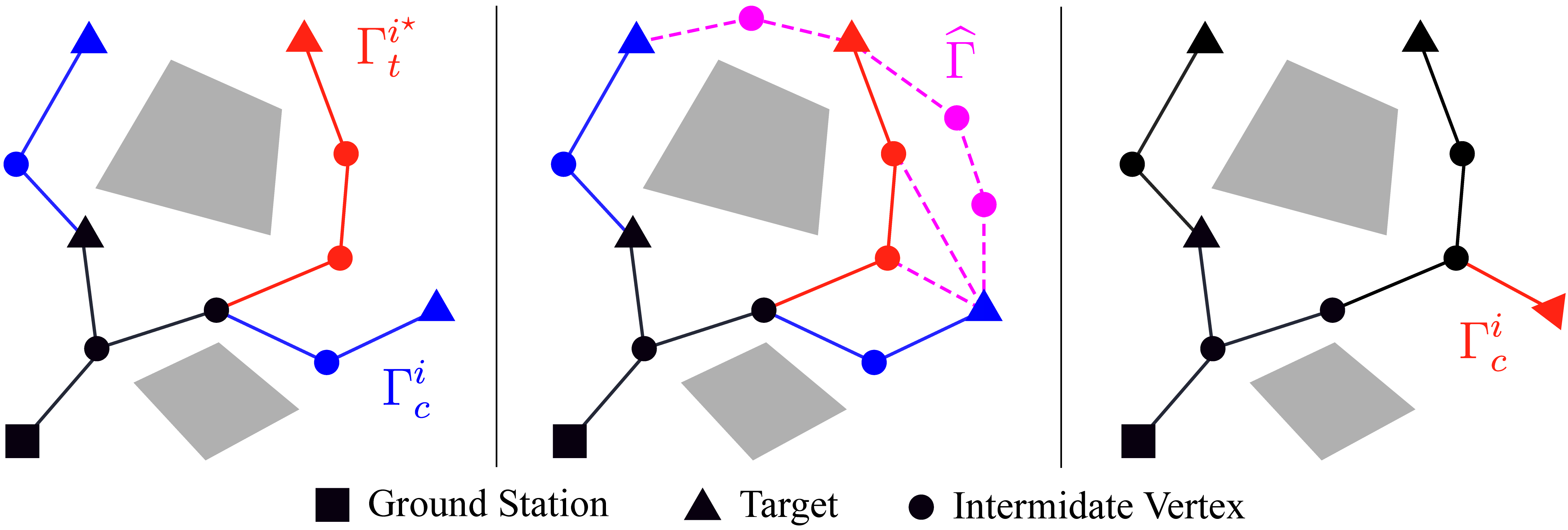}
    \vspace{-5mm}
	\caption{
			Illustration of the iterative process in Alg.~\ref{AL:span-tree}.
            \textbf{Left}: The target with the shortest path $\Gamma^{i^\star}_t$ (in red) is added to the tree,
            while the remaining targets are shown with their current-best paths~$\Gamma^{i}_c$ (in blue).
            \textbf{Middle}: Minimum-edge paths from other targets to the newly added points
            $\mathcal{V}_\text{new}$ are generated (in pink).
            \textbf{Right}: The updated topology after a new branch (in red) is added.
		}
	\label{tree-span}
\end{figure}

\begin{algorithm}[t!] \label{AL:span-tree}
	\caption{$\texttt{ComNet}()$}
	\SetKwInOut{Input}{Input}
	\SetKwInOut{Output}{Output}
	\Input{$p_0$, $\mathcal{P}_{\text{tg}}$, $\mathcal{O}$}
	\Output{$\{\Gamma^i\}$}
	$\mathcal{T} \leftarrow \{ \nu_0 \}$;
	$\mathcal{V}_\text{new} \leftarrow \left\{ \nu_0 \right\}$;
	$\mathcal{I}\leftarrow \mathcal{M}$;
	$\Gamma_c^i \leftarrow \emptyset, \; \forall i \in \mathcal{I}$\; \label{algline:initial-path_tree}
	\While{$\mathcal{I} \neq \emptyset$}{
		\For{ $i$ \rm{\textbf{in}}  $\mathcal{I}$ parallel \label{algline:span} }{
			$\widehat{\Gamma} \leftarrow \texttt{MiniEdgeRRT}^\star(p_{{\text{tg}},i},\, \mathcal{V}_\text{new},\, \mathcal{O})$ \label{algline:path-list}\;
			$\Gamma^\star \leftarrow \text{argmin}_{\Gamma\in \widehat{\Gamma}}\, \{\texttt{Cost}(\Gamma)\}$ \label{algline:get-best-path}\;
			$\Gamma_c^i \leftarrow \texttt{Compare}(\Gamma^\star,\, \Gamma_c^i)$ \label{algline:path-update} \;
		}
		$i^\star \leftarrow \text{argmin}_{i\in \mathcal{I}}\,\{\texttt{Cost}(\Gamma_c^i)\}$ \label{algline:best-index} \;
		$\mathcal{I} \leftarrow \mathcal{I} \backslash i^\star$ \label{algline:remove} \;
		$\mathcal{V}_\text{new} \leftarrow$ Extract vertices from $\Gamma_t^{i^\star}$ \label{algline:new-point} \;
		$\mathcal{T} \leftarrow \mathcal{T} \cup \mathcal{V}_\text{new}$ \label{algline:add-branch} \;
	}
	Compute the final $\Gamma^i$, $\forall i \in \mathcal{I}$ within $\mathcal{T}$ \label{algline:get-path} \;
\end{algorithm}

The computational complexity of Alg.~\ref{AL:span-tree} is similar to that of the
informed RRT$^{\star}$ in~\cite{Gammell2014},
i.e., $n_\text{sample} \log(n_\text{sample})$ where $n_\text{sample}$ is the number of sampled vertices.
In our implementation, the planning time for each call to the subroutine \texttt{MiniEdgeRRT}$^\star$
is limited by a given duration~$t_d>0$.
The computations for each target is performed in parallel,
so the overall complexity is bounded by~$\mathcal{O}\big(M n_\text{sample} \log(n_\text{sample})\big)$.
Note that the proposed algorithm does not generate the optimal spanning tree,
as this requires an exhaustive search over all possible target point sequences.

Given the spanning tree~$\mathcal{T}$ derived above,
the agents are divided into two
groups~$\mathcal{N}\triangleq \mathcal{N}_s \cup \mathcal{N}_c$.
The agents in~$\mathcal{N}_s$ are deployed at the target points
as \emph{searchers} for task execution,
while the agents in~$\mathcal{N}_c$ are assigned to the intermediate vertices as \emph{connectors}.
Thus, the total number of agents required to form the tree~$\mathcal{T}$ is given by
$N=|\mathcal{T}|-1$ with $N_s = M$ searchers
and~$N_c = N-M$ connectors.
Furthermore, the reference path for each searcher~$i\in \mathcal{N}_s$
is denoted by~$\Gamma^i\triangleq \{ p_g, \cdots, p_{{\text{tg}},n_i} \}$,
representing a sequence of waypoints along the
path in~$\mathcal{T}$ from the root
to the assigned target $p_{ {\text{tg}},n_i}$.
In contrast, the connectors do not have reference paths
and serve solely as intermediate relays.

\begin{remark}
		Related works propose self-organizing typologies such
		as line graphs in~\cite{Varadharajan2020} or spanning trees in~\cite{Luo2020,Boldrer2021}.
		Although they are applied to dynamic tasks in various environments,
		they do not consider the number of agents required as part of the optimization criteria.
	\hfill $\blacksquare$
\end{remark}


\subsection{Collaborative Trajectory Planning via Distributed MPC}\label{sec:MPC}
Given the desired communication topology~$\mathcal{T}$,
this section introduce a distributed control strategy to maintain this
topology~$\mathcal{T}$ during collaborative motion,
while ensuring collision avoidance.
The approach reformulates the problem with the distributed MPC framework~\cite{Franze2021},
where the geometric and dynamic constraints are approximated
and expressed as linear or quadratic inequalities.

To begin with,
denote by~$h >0$ the sampling time and~$K \in \mathbb{Z}^+$ the planning horizon.
The planned state and control input  of agent~$i$ at time $t+kh$ are denoted by
$x^i_k(t)\triangleq [p^i_k(t),\, v^i_k(t)]$ and $u^i_k(t)$, respectively,
$\forall k\in \mathcal{K} \triangleq  \{1,\cdots,K\}$.
Furthermore, the planned trajectory of
agent~$i$ at the time~$t$ is defined as
$\mathcal{P}^i\triangleq \{ p^i_1, p^i_2, \cdots, p^i_K \}$.
{For brevity, in the sequel, the index ``$(t)$" can be omitted if no ambiguity causes.}
Moreover, the so-called \emph{predetermined} trajectory is denoted by
$\overline{\mathcal{P}}^i \triangleq
\{ \overline{p}_{1}^{i}, \overline{p}_{2}^{i}, \cdots, \overline{p}_{K}^{i} \}$,
where $\overline{p}_{k}^{i}(t) \triangleq p^i_{k+1}(t-h)$,
$\forall k \in \{1,2,\cdots,K-1\}$ and $\overline{p}^i_{K}(t) \triangleq p^i_{K}(t-h)$.
Note that the planned trajectory~$\mathcal{P}^i$ should comply
with the double integrator model, i.e.,
\begin{equation} \label{dynamic-constraint}
	x_{k}^{i} \triangleq \mathbf{A} x_{k-1}^{i}+\mathbf{B} u_{k-1}^{i}, \forall k \in \mathcal{K},
\end{equation}
where
$\mathbf{A}
\triangleq
\left[
\begin{array}{ccc}
	\mathbf{I}_3 & h \mathbf{I}_3  \\
	\mathbf{0}_3 & \mathbf{I}_3 \\
\end{array}
\right]$,
$\mathbf{B}
\triangleq
\left[
\begin{array}{ccc}
	\frac{h^2}{2} \mathbf{I}_3   \\
	h \mathbf{I}_3
\end{array}
\right]$;
and the constraints for velocities and inputs remain
the same as follows:
\begin{equation} \label{eq:motion-constriant}
\begin{aligned}
	\| \Theta_a u^i_{k-1} \|_2  \le a_\text{max} , \,\,
	\| \Theta_v v_k^i \|_2  \le v_\text{max},   \forall k \in \mathcal{K}.
\end{aligned}
\end{equation}

\subsubsection{Collision Avoidance}
Collision avoidance among the agents can be reformulated via the
modified buffered Voronoi cell (MBVC) as proposed in our previous work~\cite{chen2024deadlock}, i.e.,
\begin{equation} \label{eq:inter-constraint}
	{a_{k}^{i j}}^{\mathrm{T}} p_{k}^{i} \geq b_{k}^{i j}, \ \forall j\neq i, \;  \forall k \in \mathcal{K},
\end{equation}
where the coefficients $a_{k}^{i j}$ and $b_{k}^{i j}$ are given by:
\begin{equation*}
\begin{aligned}
	& a_{k}^{i j}\triangleq \frac{ \overline{p}_{k}^{i}-\overline{p}_{k}^{j} } { \|\overline{p}_{k}^{i}-\overline{p}_{k}^{j}\|_{2} }, \,\,
	& b_{k}^{i j} \triangleq a_{k}^{i j^{\mathrm{T}}} \frac{\overline{p}_{k}^{i} + \overline{p}_{k}^{j}}{2}+\frac{r_{\min }^{\prime}}{2},
\end{aligned}
\end{equation*}
and $r^{\prime}_{\text{min}} \triangleq \sqrt{4{r_a}^{2}+h^{2} v_{\text{max} }^{2}} $.
On the other hand,
the collision avoidance between agent~$i\in \mathcal{N}$ and all obstacles
is realized by restricting its planned trajectory in a safe corridor formed by convex polyhedra.
This corridor serves as the boundary that separates the planned positions $p^i_k$
and the inflated obstacles given by
$\widehat{\mathcal{O}}(r_a) \triangleq \{p \in \mathcal{W} \ | \
\|p-p_o\|_2\leq r_a,\, p_o \in \mathcal{O}\}$.
Specifically, the constraint over the planned trajectory is stated as
the following linear inequality:
\begin{equation} \label{eq:obstacle-constraint}
	{a_{k}^{i,o}}^{\mathrm{T}} p_{k}^{i} \geq b_{k}^{i,o},\, \forall k \in \mathcal{K},
\end{equation}
where $a_{k}^{i,o}$ and $b_{k}^{i,o}$ represent the linear boundary for agent~$i$ at step $k$, w.r.t. obstacle $o \in \mathcal{I}_o$;
and $\mathcal{I}_o$ is the collection of obstacle indices.
The detailed derivation is omitted here due to limited space and refer
the readers to~\cite{Chen2022-2}.

Moreover,
as illustrated in Fig.~\ref{get_target_point},
an intermediate target point denoted by~$p_{\text{tg}}^i$ is computed dynamically
for each searcher~$i \in \mathcal{N}_s$.
Initially, $p_{\text{tg}}^i = p_g$.
For time $t>0$, $p_{\text{tg}}^i(t)$ is updated online as the closest vertex
in its path~$\Gamma^i$ that
\begin{equation*}
	\textbf{Conv}\big(  \left\{  \Gamma^i\left( p_{\text{tg}}^i(t-h), p_{\text{tg}}^i(t)\right),
	\,\overline{p}^i_K(t) \right\}  \big) \cap \widehat{\mathcal{O}}(r_a) = \emptyset,
\end{equation*}
where
$\Gamma^i( p_1, p_2 )$
denotes the sequence of nodes on $\Gamma^i$ from position $p_1$ to $p_2$;
and
$\textbf{Conv} (P) \triangleq \{\sum_{j=1}^J \theta_j p_j \,|\, \sum_j \theta_j =1,\,
p_j \in P,\, \theta_j \geq 0, \, j=1,\cdots,J\}$
is the convex hull formed by points within the set~$P$;
$p_{\text{tg}}^i(t-h)$ is the intermediate target at time~$t-h$.
For connector $i \in \mathcal{N}_c$,
its intermediate target point
is determined by $\overline{p}^i_K$.
Then, the position of the predetermined trajectory at the $(K+1)$-th step is
chosen as~$\overline{p}_{K+1}^{i} \triangleq  p^i_{{\text{tg}}}$.

\begin{figure}[t!]
	\centering
	\includegraphics[width=0.7\linewidth]{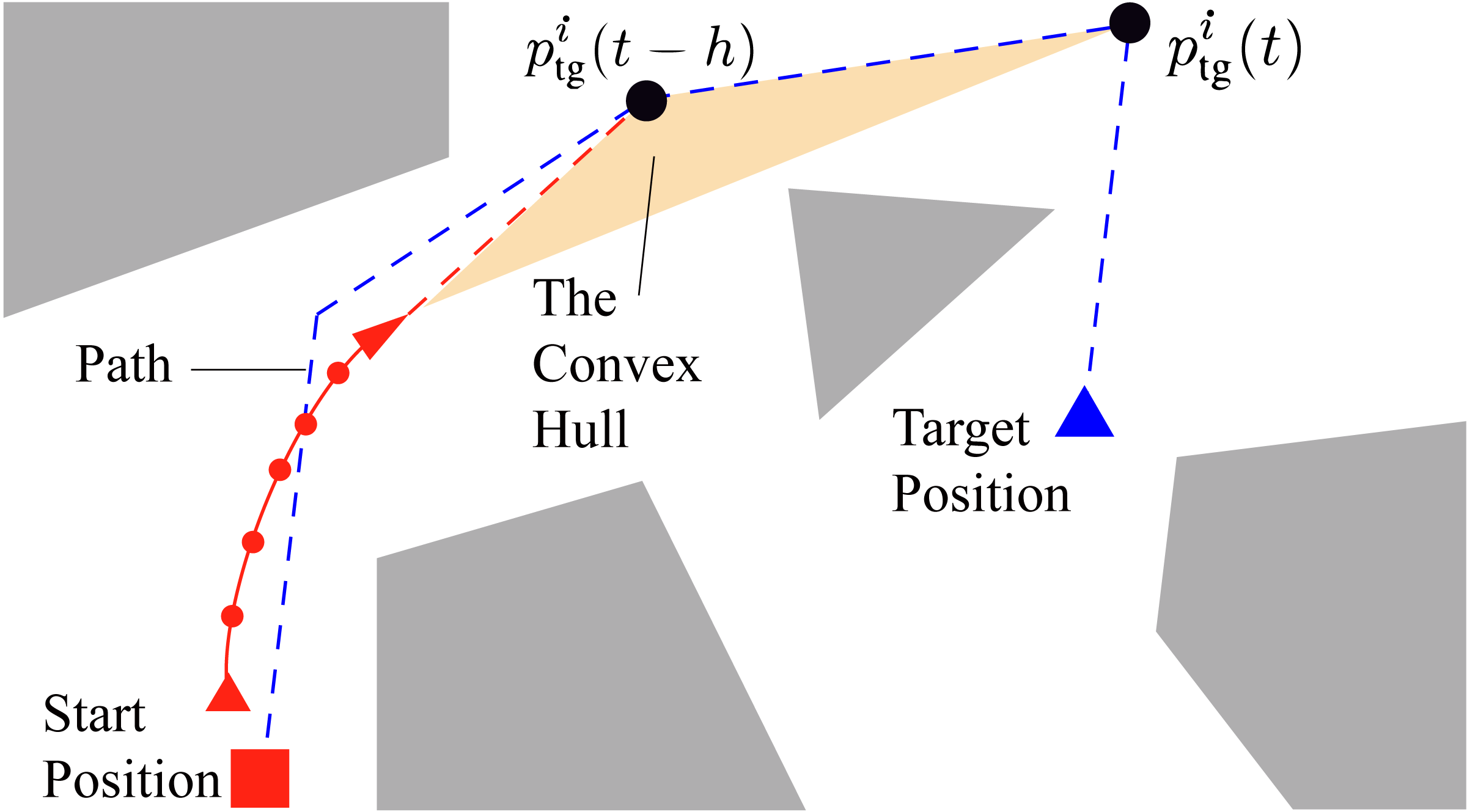}
    \vspace{-1mm}
	\caption{
		Illustration of how the intermediate target point~$p_{\text{tg}}^i(t)$
		is determined for a searcher agent~$i\in \mathcal{N}_s$ given its path~$\Gamma^i$.
		}
	\label{get_target_point}
\end{figure}

\subsubsection{Connectivity Maintenance} \label{sub: connection-maintain}
Maintaining the desired topology during motion is the most challenging constraint
to deal with.
For each pair of neighboring agents in~$\mathcal{T}$,
this constraint is decomposed into two sub-constraints:
(i) the relative distance between them should be less than~$d_{c}$;
(ii) the LOS connecting them should not be obstructed by obstacles.

The first sub-constraint is reformulated as a condition requiring
each agent stays within a sphere,
of which the center depends on its relative distance to the neighbor.
Namely,
the position of agent~$i$ at step $k$ is constrained by:
\begin{equation} \label{eq:circle-constraint}
	\left \| p_{k}^{i} - c^{i j}_k \right \| _2  \leq d_c/2,
	\, \forall j \in \mathcal{N}^i, \, \forall k \in \mathcal{K},
\end{equation}
where~$\mathcal{N}^i \subseteq \mathcal{N}$ is the set of parent and child
agents of agent~$i$ within the topology~$\mathcal{T}$;
and the center $c_k^{ij}\in \mathcal{W}$ is determined based on the predetermined trajectory of agents $i$ and $j$
by considering three different cases:
\begin{itemize}
	\item[(i)] If $ \| \overline{p}^i_k - \overline{p}^j_k \|_2 > d_w$
	with~$d_{w}>0$ being a chosen distance slightly less than~$d_{c}$,
	it indicates that agents~$i$ and~$j$ are far away and
	$c_k^{ij}= (\overline{p}^i_k + \overline{p}^j_k) / 2$;

	\item[(ii)] If $\|p-c^i_{k,\text{center}}\|<d_{w}$
	$\forall p\in \{\overline{p}^i_k, \overline{p}^j_k, \overline{p}^i_{k+1}, \overline{p}^j_{k+1}\}$,
	it indicates that agents~$i$ and $j$ are in close proximity and
	$c_k^{ij}=c^i_{k,\text{center}}$ with
	$c^i_{k,\text{center}} \triangleq {(\overline{p}^i_k+ \overline{p}^j_k+ \overline{p}^i_{k+1}+ \overline{p}^j_{k+1}) / 4}$;

	\item[(iii)] Otherwise, they are in medium distance and
	$c_k^{ij} =  \eta^{ij} (\overline{p}^i_k + \overline{p}^j_k) / 2
		+ (1-\eta^{ij}) c^i_{k,\text{center}}$,
	where~$\eta^{ij}\in [0,\,1]$ is solved by the following optimization problem:
	\begin{equation} \label{eq:eta}
		\begin{aligned}
			& \eta^{ij}= \min_{0\leq \eta \leq 1}\;  \eta, \\
			\textbf{s.t.} \;\; & \| \overline{p}^i_k - \overline{c}_k^{ij} \|_2 \leq  d_w / 2,
			\, \| \overline{p}^j_k - \overline{c}_k^{ij} \|_2 \leq  d_w / 2,\\
			& \overline{c}_k^{ij} = \eta (\overline{p}^i_k + \overline{p}^j_k) / 2 +  (1-\eta) c^i_{k,\text{center}},\\
		\end{aligned}
	\end{equation}
	which can be solved by the standard algorithm of bisection method \cite{Burden2015} with the search interval $[0,1]$.
\end{itemize}

Secondly, for the LOS sub-constraint,
linear constraints define a polyhedral safe zone for $p^i_k$ and $p^j_k$.
Before presenting these constraints, two intermediate points are computed:
$$
\overline{p}^i_{k,\text{next}} \triangleq \overline{p}^i_{k} + \xi^\star (\overline{p}^i_{k+1}-\overline{p}^i_{k}); \,
\overline{p}^j_{k,\text{next}} \triangleq \overline{p}^j_{k} + \xi^\star (\overline{p}^j_{k+1}-\overline{p}^j_{k}),
$$
where~$\xi^\star$ is the minimum~$\xi\in [0,\,1]$ such that:
\begin{equation*}
\begin{aligned}
	\textbf{Conv} \big( \{ \overline{p}^i_{k},\,  \overline{p}^j_{k},
	\, \xi \overline{p}^i_{k+1} +& (1-\xi) \overline{p}^i_{k},
	\, \xi \overline{p}^j_{k+1} + (1-\xi) \overline{p}^j_{k}\} \big)  \\
	& \cap \ \widehat{\mathcal{O}}(d_m) = \emptyset,
\end{aligned}
\end{equation*}
where $d_m>0$ is a small margin.
Similar to~\eqref{eq:eta}, the optimal~$\xi^\star$ can be computed by the bisection method.
Given $\overline{p}^i_{k,\text{next}}$ and $\overline{p}^j_{k,\text{next}}$,
the linear constraints that separate the free
space~$\textbf{Conv}(\mathcal{P}^\text{free})$
for $\mathcal{P}^\text{free} \triangleq \{ \overline{p}^i_k,\overline{p}^j_k,\overline{p}^i_{k,\text{next}},\overline{p}^j_{k,\text{next}} \}$
and  obstacle~$o \in \mathcal{I}_o$ are obtained as follows:
\begin{equation} \label{eq:safe-zone-SVM}
	\begin{aligned}
		&  \max_{ \{\widehat{a}_{k}^{ij} ,  \widehat{b}_{k}^{ij} , \delta\} }    \; \delta  ,   \\
		\textbf{s.t.} \; &   \widehat{a}_{k}^{ij,o} { }^{\mathrm{T}} \ p  \geq \delta + \widehat{b}_{k}^{ij,o}, \  \forall p \in \mathcal{P}^\text{free}; \\
		& \widehat{a}_{k}^{ij,o} {}^{\mathrm{T}} p \leq  \widehat{b}_{k}^{ij,o}, \ \forall p \in \mathcal{P}^{o} ;  \\
		&  \| \widehat{a}_{k}^{ij,o} \|_{2} = 1;
	\end{aligned}
\end{equation}
where~$\widehat{a}_{k}^{ij,o} \in \mathbb{R}^3$, $\widehat{b}_{k}^{ij,o} \in \mathbb{R}$ and $\delta \in \mathbb{R}$
are the optimization variables;
$\mathcal{P}^{o}$ are the vertices of obstacle $o \in \mathcal{I}_o$ after being inflated by~$d_m$.
Since~$\textbf{Conv} (\mathcal{P}^\text{free}) \cap \widehat{\mathcal{O}}(d_m) = \emptyset$,
it follows from the separating hyperplane theorem that $\delta>0$.
Thus, the optimization in~\eqref{eq:safe-zone-SVM} can be solved
as a quadratic program similar to \cite{Chen2022-2}.
Given these coefficients~$\widehat{a}_{k}^{ij,o}$ and $\widehat{b}_{k}^{ij,o}$,
the LOS constraint for agents $i$ and $j$ is restated as:
\begin{equation} \label{eq:safe-zone-constraint}
	{ \widehat{a}_k^{ij,o} } {}^\mathrm{T} p_{k}^{i} \geq  \widehat{b}_{k}^{ij,o}, \, \forall j \in \mathcal{N}^i, \, \forall k \in \mathcal{K},
\end{equation}
which is the separating hyperplane for obstacle~$o \in \mathcal{I}_o$. 

\subsection{Overall Algorithm}\label{sub:overall}
The objective of each searcher~$i \in \mathcal{N}_s$ is to minimize
the balanced cost,
which is the distance to the target and velocity:
\begin{equation} \label{eq:C^i-searcher}
	C^i \triangleq \frac{1}{2} Q_K \|p_{K}^{i}-p_{\text{tg}}^{i}\|_2^2 + \frac{1}{2} \sum_{k=1}^{K-1}Q_{k}\|p_{k+1}^{i}-p_{k}^{i}\|_2^2,
\end{equation}
where $Q_k>0$ are given positive definite parameters, $\forall k \in \mathcal{K}$.
Meanwhile, each connector~$i\in \mathcal{N}_c$ minimizes its distance to an intermediate point $\widetilde{p}^i_k$ determined by its neighbors, i.e.,
\begin{equation} \label{eq:C^i-connector}
	C^i \triangleq \frac{1}{2} \sum_{k=1}^{K} \|p_k^i-\widetilde{p}^i_k \|_2^2; \quad \text{and} \quad
    \widetilde{p}^i_k \triangleq \sum_{j \in \mathcal{N}^i} \alpha_j\, \overline{p}^j_k,
\end{equation}
where $\alpha_j\geq 0$ are the weighting constants
with $\sum_{j \in \mathcal{N}^i} \alpha_j\triangleq 1$,
which are often chosen to bias the child agents.
Thus, the overall trajectory optimization problem for agent~$i\in \mathcal{N}$
is summarized as follows:
\begin{equation} \label{convex-program}
	\begin{aligned}
		& \quad \quad \min _{\{\mathbf{u}^i, \mathbf{x}^i\} }\; C^{i} \\
		\mathbf{s.t.} \quad
		& x_{k}^{i}=\mathbf{A} x_{k-1}^{i}+\mathbf{B} u_{k-1}^{i},  \\
		& \| \Theta_a u^i_{k-1} \|_2  \le a_\text{max}, \, \| \Theta_v v_k^i \|_2  \le v_\text{max}, \forall k \in \mathcal{K},  \\
		& {a_{k}^{i j}}^{\mathrm{T}} p_{k}^{i} \geq b_{k}^{i j}, \, {a_{k}^{i,o}}^{\mathrm{T}} p_{k}^{i} \geq b_{k}^{i,o}, \, \forall j\neq i,\, \forall k \in \mathcal{K}, \\
		& \left \| p_{k}^{i} - c^{i j}_k \right \| _2  \leq d_c/2,
		\, \forall j \in \mathcal{N}^i, \, \forall k \in \mathcal{K}, \\
		& { \widehat{a}_k^{ij,o} } {}^\mathrm{T} p_{k}^{i} \geq  \widehat{b}_{k}^{ij,o}, \, \forall j \in \mathcal{N}^i, \, \forall k \in \mathcal{K}, \\
		& v^i_K=\mathbf{0}_3,\\
	\end{aligned}
\end{equation}
where~$\mathbf{u}^i$ and $\mathbf{x}^i$ are the stacked vectors
of $u^i_{k-1}$ and $x^i_k$, $\forall k \in \mathcal{K}$;
the last constraint is enforced to avoid an aggressive terminal state.
It is worth noting that the optimization~\eqref{convex-program} is a quadratically constrained
quadratic program (QCQP) as it only has a quadratic objective function
in addition to linear and quadratic constraints.
Thus, it can be fast resolved by off-the-shelf convex optimization solvers,
e.g.~\texttt{CVXOPT}~\cite{cvxopt}.
Moreover,
if a solution can not be found,
the predetermined trajectory $\overline{\mathcal{P}}$ is chosen as the fallback solution.

\begin{algorithm}[t!] \label{AL:IMPC-MS}
	\caption{Overall Algorithm}\label{algorithm}	\SetKwInOut{Input}{Input}
	\SetKwInOut{Output}{Output}
	\Input{$p_g$, $\mathcal{P}_{\text{tg}}$, $\mathcal{O}$}
	$\Gamma^i \leftarrow$ \texttt{ComNet}$(p_g,\mathcal{P}^i_{\text{tg}},\mathcal{O})$ \label{algline:formulated-network}\;
	$\overline{\mathcal{P}}^i(t_0) = \{ p_0^{i}, \cdots, p_0^{i} \}$ \label{algline:impc-init}\;
	\While{task not accomplished}
	{
		\For{$i \in \mathcal{N}$ \rm{concurrently} }{
			$\overline{\mathcal{P}}^j (t)$ $\leftarrow$ Receive  via communication\; \label{algline:commu}
			$\psi^{i,j}$ $\leftarrow$ Derive constraints~\eqref{eq:circle-constraint} and \eqref{eq:safe-zone-constraint}\;  \label{algline:sight-cons}
			$\zeta^i \leftarrow \{\psi^{i,j}, \forall j \in \mathcal{N}^i \}$\;
			\label{algline:get-nei-commu}
			$\zeta^i \leftarrow$ Add constraints \eqref{eq:inter-constraint} and \eqref{eq:obstacle-constraint}\;
			\label{algline:cons-collection}
			$\mathcal{P}^i(t)$ $\leftarrow$ Solve optimization~\eqref{convex-program} based on $\zeta^i$\; \label{algline:convex-programming}
		}
		$t \leftarrow t+h$\;
	}
\end{algorithm}

The complete planning algorithm is summarized in Alg.~\ref{AL:IMPC-MS}.
The design of communication topology is performed via Alg.~\ref{AL:span-tree}.
Afterwards, the predetermined trajectories of all agents are initialized.
In the main loop of execution,
the agents exchange their predetermined trajectories via communication in Line~\ref{algline:commu}.
At each time step,
the connectivity constraints including
$\psi^{ij} \triangleq \{ c^{ij}_k, \widehat{a}^{ij,o}_k,\widehat{b}^{ij,o}_k \}$
are calculated in Line~\ref{algline:sight-cons}
and exchanged among neighboring agents in Line~\ref{algline:get-nei-commu}.
In combination with the constraints for collision avoidance,
the overall constraints for~\eqref{convex-program} are stored as~$\zeta^i$
in Line~\ref{algline:cons-collection}.
Then, the optimization~\eqref{convex-program} is formulated and solved locally
by each agent~$i\in \mathcal{N}$ to obtain its planned trajectory
in Line~\ref{algline:convex-programming}.
It should be mentioned that the above procedure is applied to
all agents including the searchers and connectors.


\subsection{Property Analysis}\label{sub:property}
First, it can be shown that the proposed scheme ensures
the multi-agent system remains both safe and connected at all time steps,
as stated in the following theorem.

\begin{theorem}  \label{theo:connectivity-guarantee}
	If the agents are initially collision-free and connected,
	the system remains collision-free and the communication graph $G(t)$ remains connected at all times.
\end{theorem}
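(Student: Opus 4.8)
The plan is to prove both claims simultaneously by induction over the discrete time steps $t = nh$, $n \in \mathbb{Z}^+$, with the induction hypothesis that at each step (i) the QCQP~\eqref{convex-program} is feasible for every agent, (ii) the resulting planned trajectories are mutually collision-free and obstacle-free, and (iii) every edge of the spanning tree $\mathcal{T}$ satisfies both the range condition $\|p^i - p^j\|_2 \le d_c$ and the LOS condition. Since $\mathcal{T}$ spans $\widetilde{\mathcal{N}}$, claim (iii) immediately implies $\mathcal{T} \subseteq E(t)$, so $G(t)$ is connected; thus connectivity reduces to maintaining each individual tree edge. The crux of the entire argument is \emph{recursive feasibility}: exhibiting an explicit feasible candidate for the optimization at the next step.

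For the base case I would use the initialization $\overline{\mathcal{P}}^i(t_0) = \{p_0^i, \cdots, p_0^i\}$ from Line~\ref{algline:impc-init}, i.e.\ the stationary trajectory at the initial position. Because the agents start collision-free with $G(0)$ connected, each stationary candidate trivially satisfies the dynamic, collision, range and LOS constraints, so~\eqref{convex-program} is feasible and all guarantees hold at $t_0$.

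For the inductive step, assume feasibility at time $t$ and let $\mathcal{P}^i(t)$ be the optimal planned trajectory, which by the terminal constraint $v^i_K = \mathbf{0}_3$ ends at rest. I would take the shifted predetermined trajectory $\overline{\mathcal{P}}^i(t+h)$, with $\overline{p}^i_k(t+h) = p^i_{k+1}(t)$ and $\overline{p}^i_K(t+h) = p^i_K(t)$, as the feasibility certificate. It is dynamically admissible precisely because the agent is at rest at the horizon end, so appending a stationary terminal point violates neither the velocity nor the acceleration bound. For collision avoidance, the MBVC half-space constraints~\eqref{eq:inter-constraint} together with the corridor constraints~\eqref{eq:obstacle-constraint} are recursively feasible for the shifted trajectory and guarantee the continuous-time separations $\mathcal{B}^i \cap \mathcal{B}^j = \emptyset$ and $\mathcal{B}^i \cap \mathcal{O} = \emptyset$; here I would invoke the constructions of~\cite{chen2024deadlock,Chen2022-2}, noting that the inflated radius $r'_{\min}$ absorbs the worst-case inter-sample displacement.

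The hard part will be the recursive feasibility of the two connectivity constraints, whose coefficients are recomputed each step from the changing predetermined trajectories. For the range constraint~\eqref{eq:circle-constraint}, the key observation is that the step-$k$ center $c^{ij}_k$ is symmetric in $i,j$, so the neighbors reference the same sphere, and is constructed so that both $\|\overline{p}^i_k - c^{ij}_k\|_2 \le d_c/2$ and $\|\overline{p}^j_k - c^{ij}_k\|_2 \le d_c/2$: in case (i) this follows because $c^{ij}_k$ is the midpoint while the inherited bound $\|\overline{p}^i_k(t+h) - \overline{p}^j_k(t+h)\|_2 = \|p^i_{k+1}(t) - p^j_{k+1}(t)\|_2 \le d_c$ holds from the previous step's constraint, and in cases (ii)--(iii) the defining program~\eqref{eq:eta} enforces the half-distance bounds directly. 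Hence the predetermined points lie in the common sphere, certifying feasibility, and the triangle inequality then yields $\|p^i_k - p^j_k\|_2 \le d_c$ for the actual solution. For the LOS constraint~\eqref{eq:safe-zone-constraint}, I would argue that the hyperplane $(\widehat{a}^{ij,o}_k, \widehat{b}^{ij,o}_k)$ from the separating program~\eqref{eq:safe-zone-SVM} strictly separates $\textbf{Conv}(\mathcal{P}^{\text{free}})$ from each inflated obstacle; since $\overline{p}^i_k$ and $\overline{p}^j_k$ lie in $\mathcal{P}^{\text{free}}$ they satisfy the half-space inequality, and because the margin $d_m$ and the intermediate points $\overline{p}^i_{k,\text{next}}, \overline{p}^j_{k,\text{next}}$ guarantee the entire swept convex hull avoids $\widehat{\mathcal{O}}(d_m)$, the executed segment $\textbf{Line}(p^i_k, p^j_k)$ remains obstacle-free and the LOS is preserved. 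Combining both connectivity conditions, every tree edge persists, which closes the induction and establishes that $G(t)$ stays connected and the team stays collision-free for all $t$.
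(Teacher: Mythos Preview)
Your proposal is correct and follows essentially the same route as the paper: both arguments establish recursive feasibility of~\eqref{convex-program} by taking the shifted predetermined trajectory $\overline{\mathcal{P}}^i$ as the feasibility certificate, verify the base case via the stationary initialization at $t_0$, check that $\overline{p}^i_k$ satisfies the freshly computed circle constraint~\eqref{eq:circle-constraint} and the separating-hyperplane constraint~\eqref{eq:safe-zone-constraint} (the latter because $\overline{p}^i_k\in\mathcal{P}^{\text{free}}$ in~\eqref{eq:safe-zone-SVM}), and defer the collision-avoidance constraints to~\cite{Chen2022-2}. Your write-up is in fact more explicit than the paper's on the case split for $c^{ij}_k$ and on deducing $\|p^i_k-p^j_k\|_2\le d_c$ via the triangle inequality, but the underlying argument is identical.
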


\begin{proof}
    First, it can be shown that the optimization in~\eqref{convex-program}
	is recursively feasible.
	At the initial time~$t_0$, the predetermined trajectory $\overline{\mathcal{P}}^i(t_0)= \left\{  p^{i}(t_0), \cdots, p^{i}(t_0) \right\}$
    satisfies all constraints in~\eqref{convex-program} as the agents are initially collision-free and the network is connected.
    Then, the optimization at time $t$ remains feasible since the constraints \eqref{eq:circle-constraint} 
    and \eqref{eq:safe-zone-constraint} are satisfied at $t$ if they are satisfied at $t-h$, as shown in Sec.~\ref{sub: connection-maintain}.
	
	Second, given a feasible solution at $t-h$ with control inputs $u^i_{k-1}(t-h)$ and state variables $x^i_k(t-h)$ for $k \in \mathcal{K}$,
	 the planned trajectories $p^i_k(t)=\overline{p}^i_k(t)$ are also feasible.
	For the constraint in~\eqref{eq:circle-constraint},
	 it is clear that $\overline{p}^i_k(t)$ lies within the constrained circle.
	 For the constraint in~\eqref{eq:safe-zone-constraint},
     $\overline{p}^i_k(t)$
     satisfies $ \widehat{a}_{k}^{ij} {}^\mathrm{T} \overline{p}^i_k(t)  \geq \delta + \widehat{b}_{k}^{ij}$,
     confirming its feasibility.
     The feasibility of other constraints has already been proven in \cite{Chen2022-2}.

	 Since both the initial and successive optimizations are feasible, the planned trajectories $\mathcal{P}^i$ derived from the optimization~\eqref{convex-program}
	  satisfy all constraints for all agents~$i \in \mathcal{N}$,
     which concludes the proof.
\end{proof}

\begin{remark}\label{remark:connectivity}
	The communication network~$G(t)$ is time-varying,
	with edges being added or removed as the agents move dynamically.
	However, the communication topology~$\mathcal{T}$ derived from Alg.~\ref{AL:span-tree}
	is always {contained} in~$G(t)$, meaning that the connectivity
	of~$\mathcal{T}$ is guaranteed at all time.
\hfill $\blacksquare$
\end{remark}

\begin{figure}[t]
    \centering
	\includegraphics[width=0.98\linewidth]{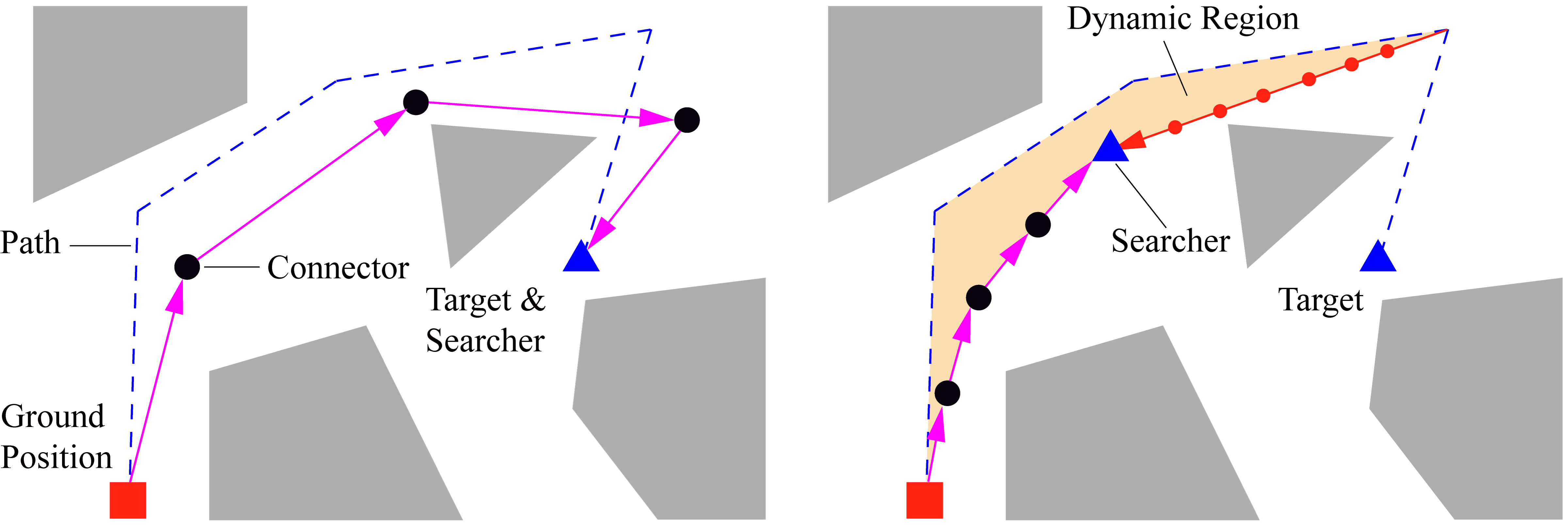}
    \vspace{-2mm}
	\caption{ \textbf{Left}: the chain (in solid line) from the ground station to a searcher
	has the same topological structure as the given path (in dashed line).
	\textbf{Right}: the area in orange shows the dynamic region. }
	\label{topology-structure}
    \vspace{-1mm}
\end{figure}

Another property concerns the evolution of this embedded tree in the workspace.
To begin with, the concept of {equivalent topologies} is defined as follows,
similar to~\cite{Gao2020}.
\begin{definition}
	Two polylines are \emph{topologically equivalent}
	if they share the same start and end points and can be transformed
	into each other without crossing any obstacle.\hfill $\blacksquare$
\end{definition}

\begin{theorem} \label{pro: topology}
	The polyline formed by the positions of agents from the ground station to a searcher is topologically
	equivalent to the path in the embedded tree~$\mathcal{T}$.
\end{theorem}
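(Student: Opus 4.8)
The plan is to prove topological equivalence by a continuity argument over the execution horizon: I will establish that the agent polyline is obstacle-free at every instant, that its homotopy class relative to $\mathcal{O}$ is invariant in time, and that this class coincides with that of the tree path $\Gamma^i$. First I would fix a searcher $i\in\mathcal{N}_s$ together with its chain of ancestor agents along the path in $\mathcal{T}$, indexed $i_0,i_1,\ldots,i_L$ with $i_0$ the ground station and $i_L=i$, and denote their positions by $p^{i_0}(t)=p_g,\ldots,p^{i_L}(t)$. By Theorem~\ref{theo:connectivity-guarantee} and Remark~\ref{remark:connectivity}, the designed topology $\mathcal{T}$ is contained in $G(t)$ at all times, so every consecutive pair $(i_{l-1},i_l)$ is a neighbor: their relative distance is within $d_c$ and the LOS segment $\textbf{Line}(p^{i_{l-1}}(t),p^{i_l}(t))$ does not intersect $\mathcal{O}$. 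Concatenating these $L$ obstacle-free segments yields the polyline $\mathcal{L}(t)$, which is therefore obstacle-free and shares its start point $p_g$ with $\Gamma^i$; its end point is the searcher, which tracks its intermediate target $p^i_{\text{tg}}(t)$ on $\Gamma^i$ and reaches $p_{{\text{tg}},n_i}$ at termination, so that $\mathcal{L}(t)$ and the relevant prefix of $\Gamma^i$ share both endpoints.

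Next I would show that each segment remains in a fixed homotopy class. For a neighboring pair $(i,j)$ and any obstacle $o\in\mathcal{I}_o$, the safe-zone program~\eqref{eq:safe-zone-SVM} returns a separating hyperplane with margin $\delta>0$ such that $\textbf{Conv}(\mathcal{P}^{\text{free}})$ lies strictly on the free side while the inflated obstacle lies on the other. Since $\mathcal{P}^{\text{free}}=\{\overline{p}^i_k,\overline{p}^j_k,\overline{p}^i_{k,\text{next}},\overline{p}^j_{k,\text{next}}\}$, this convex region contains both the current LOS segment and the predicted one, and the actual positions obey~\eqref{eq:safe-zone-constraint}; hence the segment sweeps continuously within an obstacle-free convex region bridging steps $k$ and $k{+}1$ and never crosses $o$. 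Applying this to every obstacle and every sampling step, no segment of $\mathcal{L}(t)$ crosses any obstacle during the motion, so the homotopy class of $\mathcal{L}(t)$ is invariant in $t$.

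I would then identify this invariant class with that of $\Gamma^i$. Each connector is driven toward the convex combination $\widetilde{p}^i_k$ of its neighbors by~\eqref{eq:C^i-connector} and is confined to the sphere in~\eqref{eq:circle-constraint}, so it stays between its parent and child; each searcher advances $p^i_{\text{tg}}(t)$ along $\Gamma^i$ only when the convex hull joining the path segment to the agent is obstacle-free, which forbids the endpoint from jumping across an obstacle. Consequently, the straight-line deformation carrying each agent position to its assigned tree vertex remains obstacle-free and recovers $\Gamma^i$ exactly, so $\mathcal{L}(t)$ is topologically equivalent to the path in $\mathcal{T}$.

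The hardest part will be rigorously bridging the per-step, discrete-time separating-hyperplane guarantee to the continuous-time trajectory: one must argue that $\textbf{Conv}(\mathcal{P}^{\text{free}})$ indeed contains the moving LOS segment throughout the whole interval $[t,t+h]$, not merely at the sample points, so that the homotopy class cannot switch between samples. The intermediate-target mechanism is the essential device here, since it is precisely what prevents the searcher endpoint from advancing past an obstacle and thereby changing homotopy class as it tracks $\Gamma^i$.
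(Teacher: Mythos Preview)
Your first two paragraphs are essentially the paper's argument in different clothing: the paper also shows that the polyline sweeps through free space without ever crossing an obstacle, but it packages this as a single ``dynamic region'' bounded by the polyline, the searcher's executed trajectory, the segment $\textbf{Line}(p^i_K,p^i_{\text{tg}})$, and the portion of $\Gamma^i$ already traversed. Showing that each piece of this boundary is obstacle-free at all times (using~\eqref{eq:safe-zone-constraint} for the polyline, \eqref{eq:obstacle-constraint} for the $p^i_K\to p^i_{\text{tg}}$ segment, and the intermediate-target update rule for the loop that appears when $p^i_{\text{tg}}$ advances) yields an obstacle-free 2-cell whose two ``long'' sides are exactly the polyline and the tree path, and topological equivalence follows immediately. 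Your per-segment, per-step homotopy-invariance framing achieves the same sweep conclusion and is fine.

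The genuine gap is your third paragraph. You propose to identify the invariant homotopy class with that of $\Gamma^i$ by a straight-line deformation carrying \emph{every} agent to its assigned tree vertex. Nothing in~\eqref{eq:circle-constraint} or~\eqref{eq:C^i-connector} forces a connector to be in line of sight of its designated vertex in $\mathcal{T}$; a connector can drift (within all constraints) behind an obstacle relative to that vertex, and then your linear homotopy crosses $\mathcal{O}$. The paper avoids this entirely: it never links connectors to their tree vertices, only the searcher endpoint to $\Gamma^i$, via the single segment $\textbf{Line}(p^i_K,p^i_{\text{tg}})$. That segment is obstacle-free because both endpoints lie in the same convex safe polyhedron from~\eqref{eq:obstacle-constraint}, and when $p^i_{\text{tg}}$ jumps forward the update rule guarantees the quadrilateral $\{p^i_{\text{tg}}(t-h),\overline{p}^i_K,p^i_{\text{tg}}(t)\}\cup\Gamma^i(p^i_{\text{tg}}(t-h),p^i_{\text{tg}}(t))$ is obstacle-free. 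Replace your vertex-by-vertex deformation with this single closing segment and the argument goes through.
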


\begin{proof} (Sketch)
    Consider the dynamic region enclosed by the polyline,
    the searcher's trajectory, the line connecting $p^i_K$ and $p^i_\text{tg}$,
    and the path in the tree~$\mathcal{T}$,
    as shown in Fig.~\ref{topology-structure}.
    Initially, the region is empty with agents near the ground station.
    During motion, the polyline and trajectory are collision-free.
    The line segment from $p^i_K$ to $p^i_\text{tg}$ does not intersect obstacles,
    as both points are within a convex polyhedron
    due to constraint~\eqref{eq:obstacle-constraint}.
    For replanning, only the tractive point changes,
    and the loop formed by $p_\text{tg}(t-h)$, $\overline{p}^i_K$, $p_\text{tg}(t)$,
    and $\Gamma^i$ does not intersect obstacles.
    Thus, the dynamic region remains obstacle-free,
    confirming the topological equivalence.
\end{proof}

The above theorem shows that the polylines formed by all agents
remain consistent with the embedded tree~$\mathcal{T}$ at all times.
In other words, both searchers and connectors are deployed according to the designed tree topology. 

\section{Numerical Experiments} \label{sec:experiments}
This section presents the numerical simulations and hardware experiments to validate the proposed scheme.
The method is implemented in Python3 using \texttt{CVXOPT} \cite{cvxopt} for optimization.
All tests run on a computer with Intel Core i9 @3.2GHz of $10$ Cores and $32$GB memory.
The source code is available at https://github.com/CYDXYYJ/IMPC-MD.

\begin{figure}[t]
	\centering
	\includegraphics[width=0.95\linewidth]{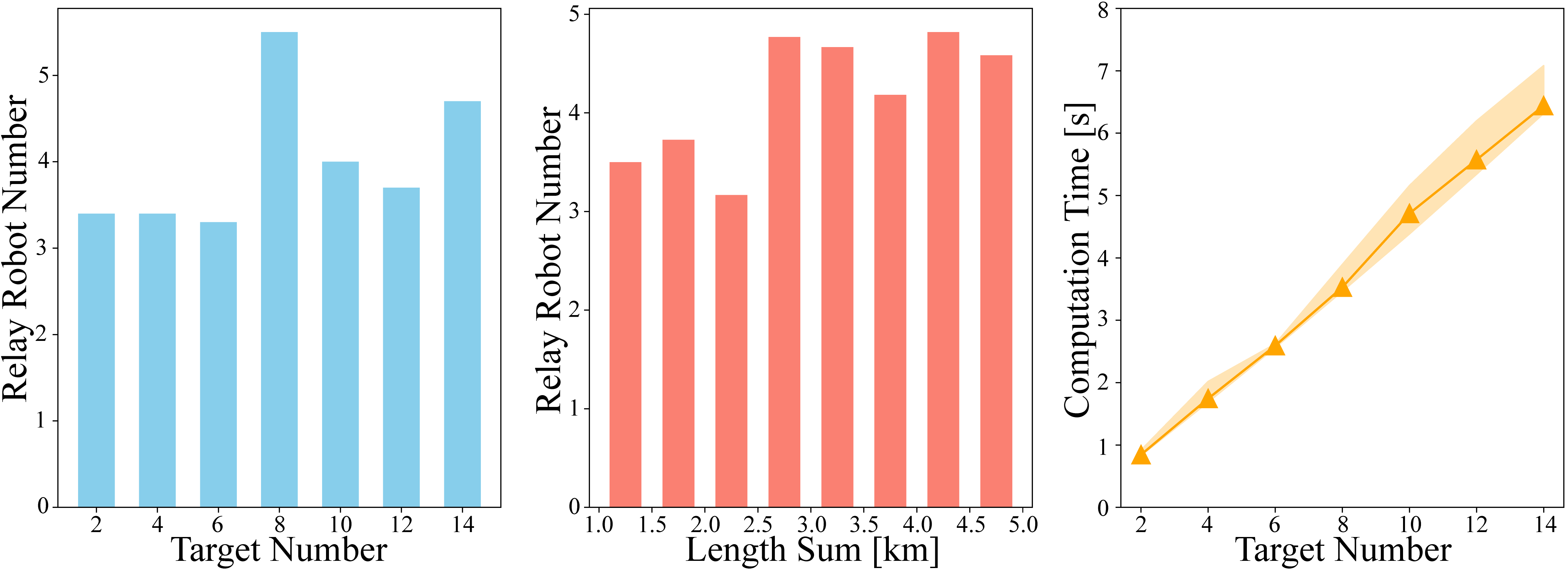}
    \vspace{-1mm}
	\caption{
		The communication topology via Alg.~\ref{AL:span-tree}
		given different numbers of targets,
		including the number of relay agents required (\textbf{Left}),
		the total length of the spanning tree (\textbf{Middle}),
		and the computation time (\textbf{Right}).
	}
	\label{fig:network-evaluation}
\end{figure}

\subsection{Numerical Simulations}
To mimic ariel vehicles,
the maximum velocity and acceleration of agents are set
to~$15 \rm{m/s}$ and $3 \rm{m/s^2}$, respectively.
The replanning interval for MPC is set to the sampling time~$h=0.5$s.
The safety radius of agent is set to $r_a=2$m.
The communication range and the margin of safe zone are chosen as
$d_c = 150$m and $d_m = 3$m, respectively.
The considered 3D workspace has a cubic volume from $(0\text{m},0\text{m},0\text{m})$ to $(500\text{m}, 500\text{m}, 100\text{m})$
with cluttered and convex-shaped obstacles, as shown in Fig.~\ref{fig:overall}.

\subsubsection{Design of Communication Network}
The proposed Alg.~\ref{AL:span-tree} is evaluated with different numbers of
target points within the obstacle-free space.
The algorithm computes the desired communication network by
\texttt{MiniEdgeRRT}$^\star$.
For each set of target points, the algorithm is run 15 times.
The results, including the number of relay nodes,
total length of the spanning tree, and computation time,
are summarized in Fig.~\ref{fig:network-evaluation}.
It is observed that the number of relay agents does not increase linearly
with the number of targets,
primarily due to the ``edge sharing" mechanism in Alg.~\ref{AL:span-tree}.
A similar pattern is observed for the total length of the spanning tree.
Finally, the computation time increases linearly with the number of targets,
as expected from the analysis in Sec.~\ref{sec:network-formulation}.

\begin{table} [t]
	\caption{
			Comparison with Baselines (Avg. over $10$ runs)
	}
    \vspace{-2mm}
	\label{table:comparison}
	\begin{tabular}{c|cccc|cccc}
		\toprule
		Metric & \multicolumn{4}{c}{$N_r$}  & \multicolumn{4}{c}{$N_h$} \\
		\cmidrule(r){2-5} \cmidrule(r){6-9}
		Targets & 2 & 4 & 6 & 8 & 2 & 4 & 6 & 8 \\
		\midrule
		{MST}             & {7.0} & {8.4} & {8.7} & {9.9}  & {10.7} & {27.7} & {38.2} & {61.6} \\
		VWMST & 7.1 & 8.9 & 9.7 & 10.8 & 8.5  & 19.1 & 26.3 & 37.5 \\
		DST             & 7.0 & 8.2 & 8.5 & 9.5  & 9.1  & 23.2 & 33.3 & 45.3 \\
		\textbf{Ours}            & 6.6 & 7.7 & 7.7 & 8.6  & 9.6  & 24.1 & 32.0 & 49.5 \\
		\bottomrule
	\end{tabular}
\end{table}

\begin{figure}[t]
	\centering
	\includegraphics[width=1.01\linewidth]{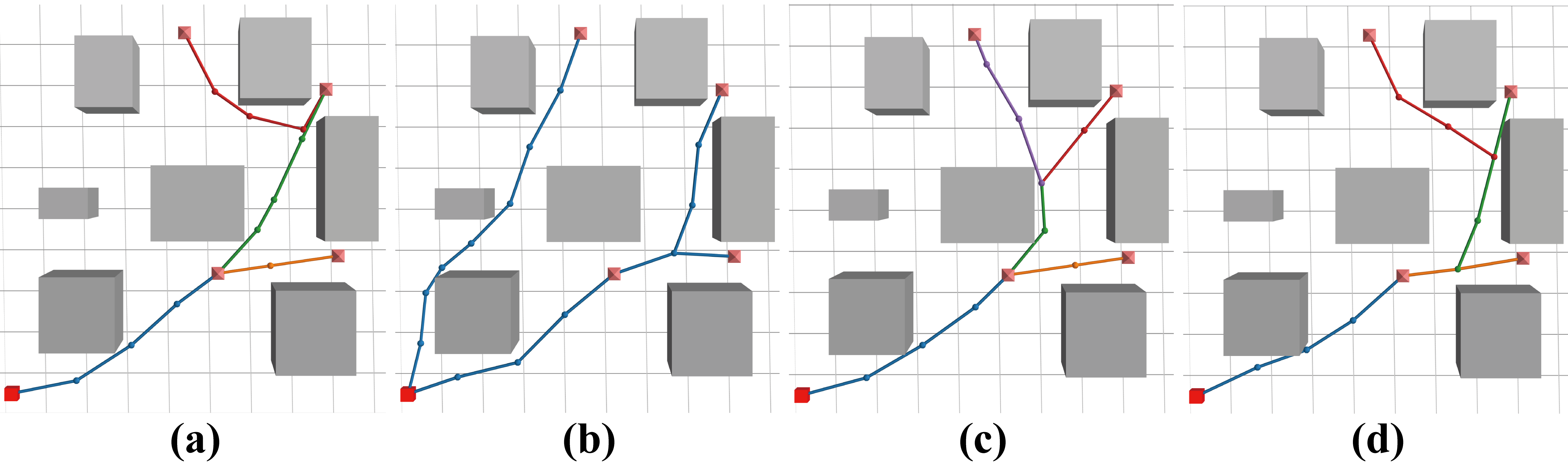}
    \vspace{-6mm}
	\caption{
		Comparison of the resulting communication topology
		for~$4$ target points,
		between \textbf{(a)} the MST-based method
		\textbf{(b)} the self-organizing method VWMST,
		\textbf{(c)} the distance-based spanning tree DST
		and \textbf{(d)}  our method.
		In addition to~$4$ searchers, $10$, $13$, $9$, $8$ connectors as relay agents
		are required by these methods, respectively.}
	\label{fig:network-compare}
\end{figure}

The proposed approach is compared with three baselines:
(i) Minimum Spanning Tree (MST),
(ii) Visibility Weighted Minimum Spanning Tree (VWMST) in \cite{Boldrer2021}, which selects the result with the fewest agents,
(iii) Distance-based Spanning Tree (DST), which greedily adds targets based on distance.
All methods are tested $10$ times,
comparing the average number of relay agents $N_r$ and hops from the ground station to targets $N_h$.
As shown in Table~\ref{table:comparison},
the our method requires only $8$ relay agents for $4$ targets on average.
Compared to VWMST, it uses less than $15\%$ of the relay agents for $4$, $6$, and $8$ targets.
VWMST, being decentralized,
often leads searchers to take direct paths to targets, resulting in fewer hops but not shared edges.
In comparison with MST and DST, our method requires slightly fewer relay agents, while MST has notably more hops.

\subsubsection{Collaborative Motion Planning}

\begin{figure}[t]
	\centering
	\includegraphics[width=0.98\linewidth]{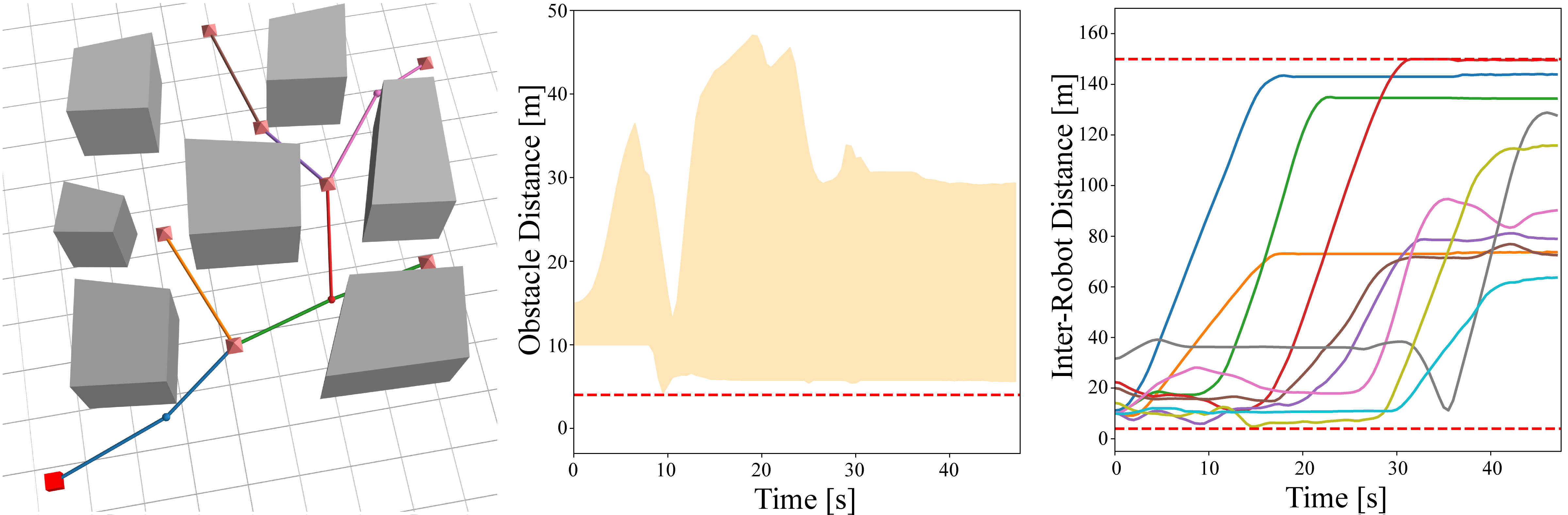}
    \vspace{-1mm}
	\caption{
		Illustration of the communication topology (\textbf{Left});
		the relative distance between the LOS of neighboring agents and the closet obstacle (\textbf{Middle}); and the relative distance between interconnected agents (\textbf{Right}).
	}
	\label{fig:simulation}
\end{figure}

\begin{figure}[t]
	\centering
	\includegraphics[width=0.95\linewidth]{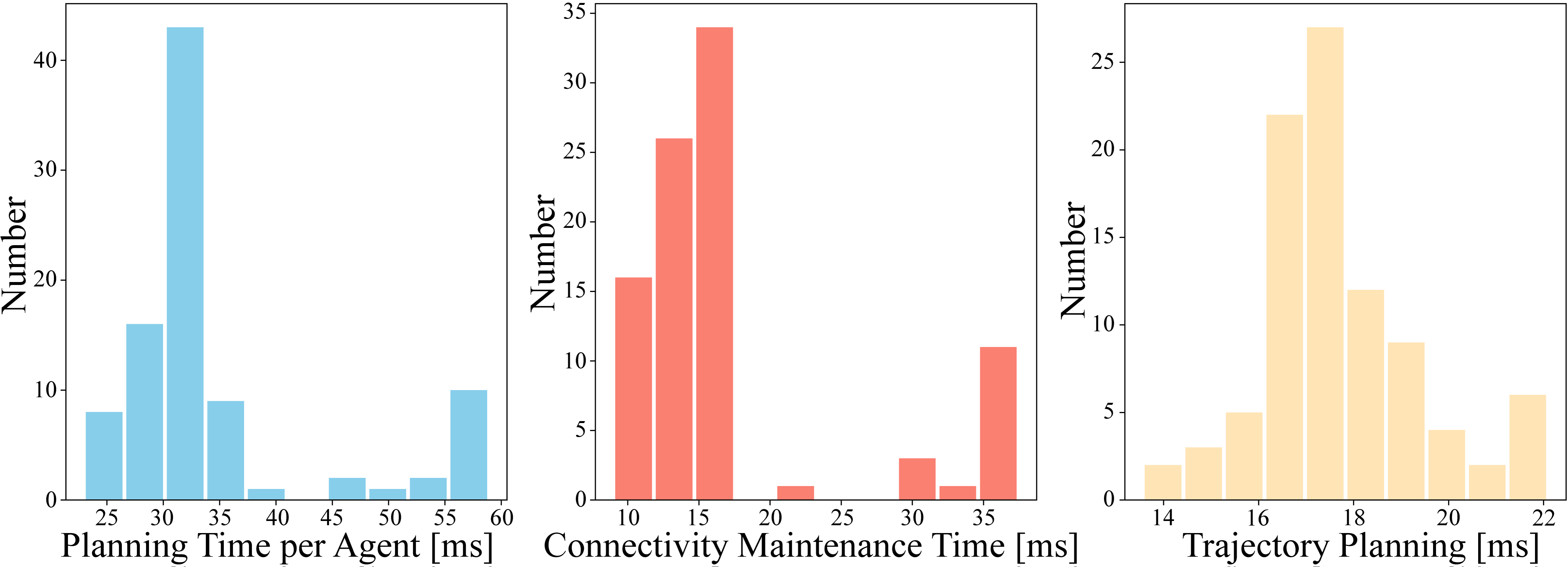}
    \vspace{-1mm}
	\caption{
		Distribution of the computation time of each iteration
		across agents (\textbf{Left}), which consists of:
		(i) the time to derive the constraints in~\eqref{eq:safe-zone-SVM} (\textbf{Middle});
		and (ii) the time to solve the final optimization in~\eqref{convex-program} (\textbf{Right}).
	}
	\label{fig:time-cost}
\end{figure}

The communication warning distance $d_w$ is set to $142$m,
and the margin of LOS is set to $3$m.
The weighting parameters $\alpha_j$ in \eqref{eq:C^i-connector} are set to $3$
for child neighbors and $1$ for parent neighbors.
The resulting trajectories under~$7$ targets within~$46$s are shown in Fig.~\ref{fig:overall} and~\ref{fig:simulation},
in addition to the relative distances between any two
agents and the minimum distance between their LOS to the obstacles.
The results show that the distance between any LOS and obstacles is always greater than $d_m = 3$m,
and the distance between neighboring agents stays below $d_c = 150$m.
Besides, the distance between any two agents stay positive at all times.
The average speed of all $7$ searchers reaches $80\%$ of the maximum speed,
ensuring efficient execution while maintaining safety and connectivity.
As shown in Fig.~\ref{fig:time-cost},
the average computation time per MPC iteration is $35$ms across $100$ iterations,
with $17.3$ms for constraints derivation in~\eqref{eq:safe-zone-SVM},
and $17.7$ms for solving the optimization in~\eqref{convex-program}.
The former however exhibits a larger variance across iterations due to
the varying proximal obstacle environment.


\begin{figure}[t!]
	\centering
	\includegraphics[width=0.98\linewidth]{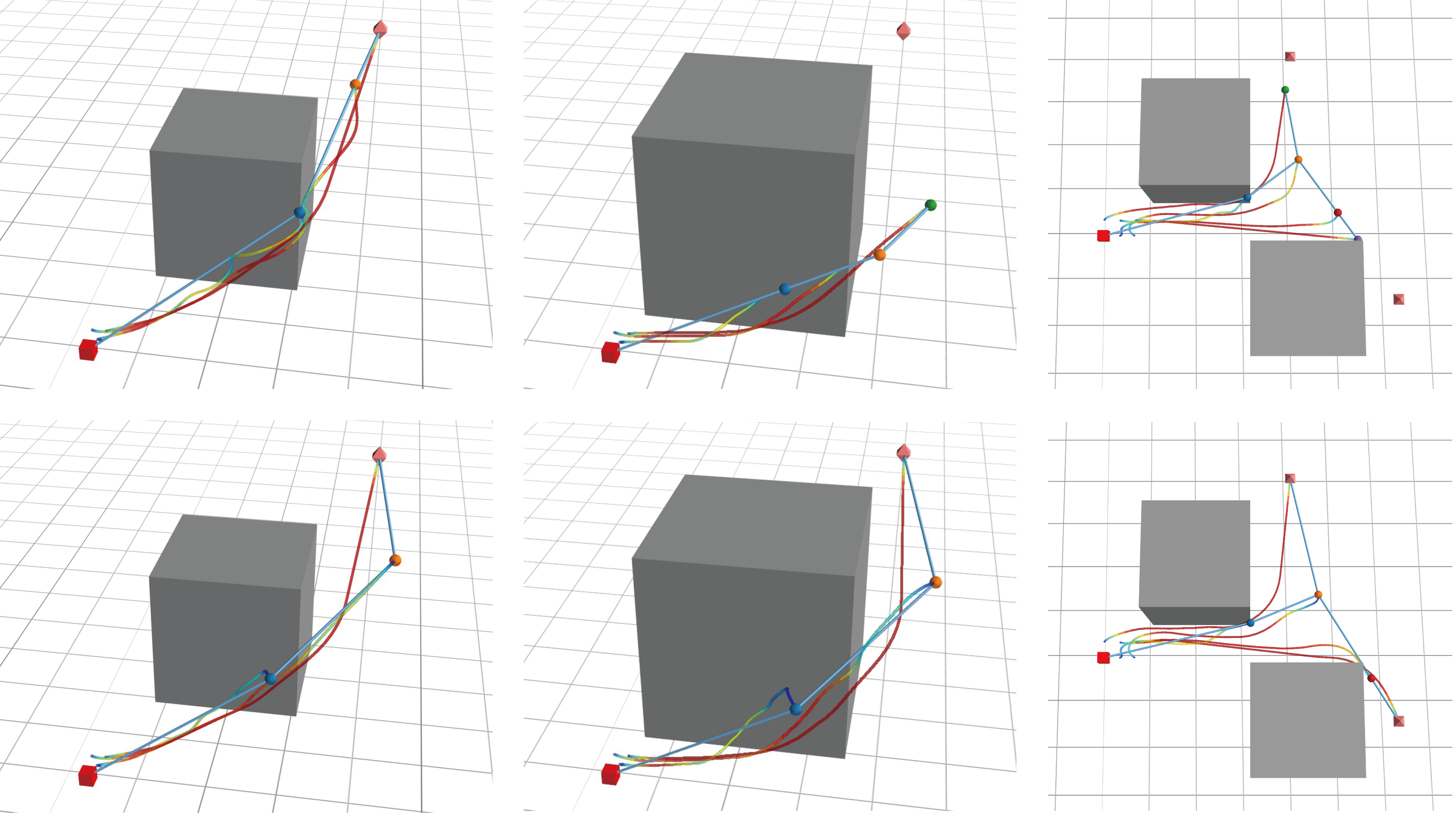}
    \vspace{-2mm}
	\caption{
		Comparisons of the resulting trajectories by the method in \cite{Caregnato2022}~(\textbf{Top})
		and our method~(\textbf{Bottom}) for different obstacle environments.
	}
	\label{fig:motion-compare}
\end{figure}

For comparison, the method in~\cite{Caregnato2022} is implemented via MILP.
The resulting trajectories in different workspaces are shown in Fig.~\ref{fig:motion-compare}.
For a cubic obstacle of width $100$m, the method in~\cite{Caregnato2022} takes over $857$ms to find a trajectory of length $850$m.
In comparison, our method takes only $65$ms and results in a trajectory of $765$m.
For a larger cubic obstacle of width $150$m, the method in~\cite{Caregnato2022} fails to complete the task,
while our method finishes within $30.0$s.
In another scenario with two obstacles requiring more relay points,
the method in~\cite{Caregnato2022} fails to generate a feasible solution
due to long computation times (over $10$s per replanning) and obstruction.
Our method, however, completes the task in $23.5$s with a trajectory length of $1291$m,
demonstrating its robustness in various workspaces.


\begin{figure}[t]
	\centering
	\includegraphics[width=1\linewidth]{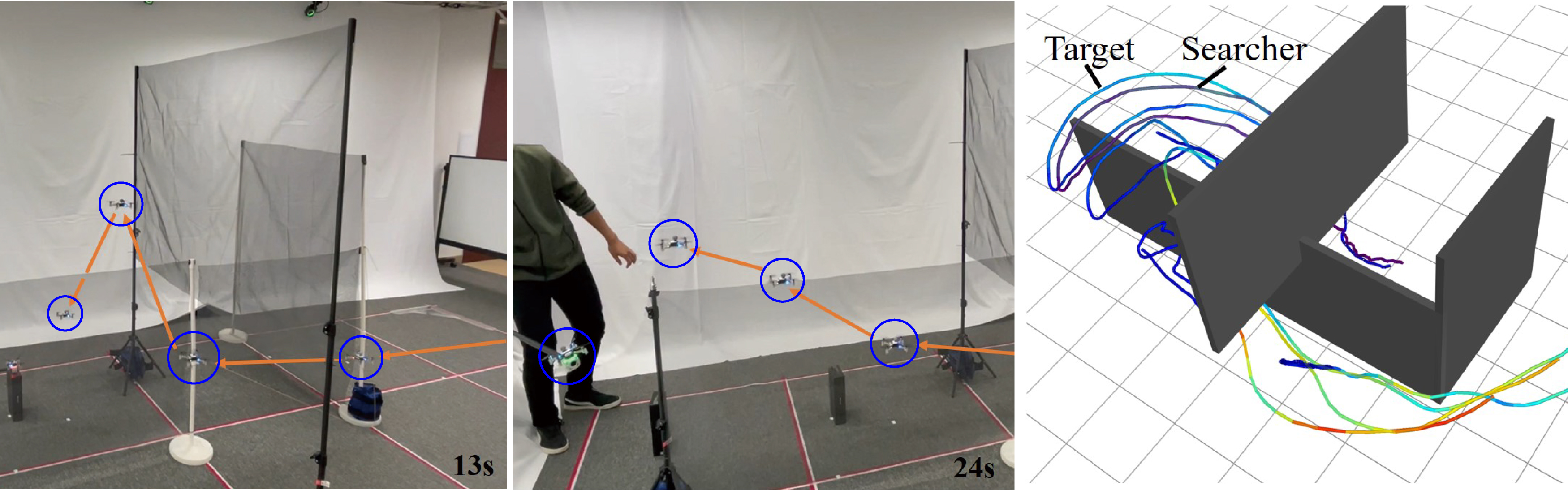}
    \vspace{-5mm}
	\caption{Illustration of how the network and the agent trajectories
		adapt online when the target is dynamically moving.}
	\label{fig:realfly}
\end{figure}

\begin{figure} [t!]
	\centering
	\includegraphics[width=1\linewidth]{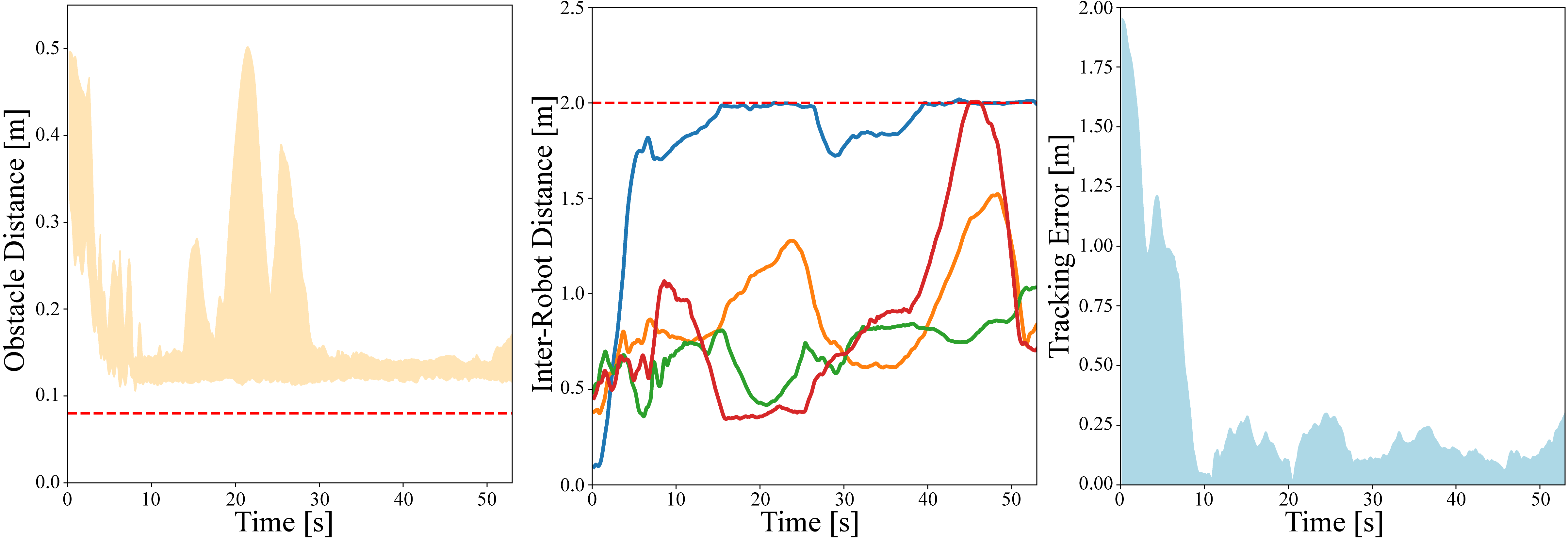}
    \vspace{-5mm}
	\caption{
		Numerical results from the experiment study:
		the minimum distance between the LOS of any neighboring agents and the obstacles (\textbf{Left});
		the distance between any pair of neighboring agents (\textbf{Middle});
		and the tracking error between the target and the assigned searcher (\textbf{Right}).
	}
	\label{fig:experiment}
\end{figure}

\subsection{Hardware Experiment}
Hardware experiments are conducted using Crazyswarm,
with motion tracked by OptiTrack and data transmitted to a workstation via Motive.
Due to the down-wash effect, each quadrotor is enclosed by a safety ellipsoid
with a diameter of $0.24$m in the $xy$ plane and $0.6$m along the $z$ axis.
The maximum velocity and acceleration are set to $1.0{\rm m/s}$ and $1.0{\rm m/s^2}$,
with a sampling time~$h=0.2s$ and a horizon length $K=8$.
The workspace includes known 3D walls.
Alg.~\ref{AL:IMPC-MS} runs on a central computer with multi-processing,
where each agent operates in an independent process.

In first experiment,
two targets are placed between the walls.
The communication topology
consists of $4$ connectors and $2$ searchers in an ``F''-shaped configuration, as shown in Fig.~\ref{fig:overall}.
The entire mission takes around~$10$s, with an average velocity of $0.4{\rm m/s}$ for all targets,
while ensuring collision avoidance and connectivity.
In the second experiment,
the target is \emph{moved manually} as shown in Fig.~\ref{fig:realfly}.
requiring the searcher to track it.
The whole fleet adapts to the movement, maintaining the motion constraints,
as shown in Fig.~\ref{fig:experiment}.
The tracking error stays below $0.25$m at all times,
demonstrating the system's robustness during online trajectory replanning. 
\section{Conclusion} \label{sec:conclusion}
This work presents a multi-UAV deployment method in obstacle-cluttered environments,
ensuring LOS connectivity and collision avoidance.
Different from most existing methods, the team size is designed,
and the distributed MPC guarantees feasibility and safety.
Future work will explore broader applications, such as exploration and surveillance. 

\bibliographystyle{IEEEtran}
\bibliography{contents/references}

\end{document}